\newcommand{\SW}{\texttt{SEW-IL}}
\title{Semi-supervised Wrapper Feature Selection by Modeling Imperfect Labels}
\author{
Vasilii Feofanov, Emilie Devijver, Massih-Reza Amini\\
Université Gwrenoble Alpes, CNRS\\
LIG/IMAG, 700 av. centrale 
38401 Saint-Martin d'Hères, France \\
\url{FirstName.Lastname@univ-grenoble-alpes.fr}
}
\date{}
\begin{document}

\maketitle

\begin{abstract}
In this paper, we propose a new wrapper feature selection approach with partially labeled training examples where unlabeled observations are  pseudo-labeled using the predictions of an initial classifier trained on the labeled training set. The wrapper is composed of a genetic algorithm for proposing new feature subsets, and an evaluation measure for scoring the different feature subsets. The selection of feature subsets is done by assigning weights to characteristics and recursively eliminating those that are irrelevant. The selection criterion is based on a new multi-class $\mathcal{C}$-bound that explicitly takes into account the mislabeling errors induced by the pseudo-labeling mechanism, using a probabilistic error model. Empirical results on different data sets show the effectiveness of our framework compared to several state-of-the-art semi-supervised feature selection approaches.
\end{abstract}

\section{Introduction}
We consider semi-supervised learning problems where observations are described by a large number of characteristics. In this case, the original set of features may contain \emph{irrelevant} or \emph{redundant} characteristics to the output, which with the lack of labeled information, lead to inefficient learning models. In practice, the removal of such features has been shown to provide important keys for the interpretability of results and yield to better model predictions \cite{Guyon:2003,Chandrashekar:2014}.

Depending on the availability of class labels, feature selection techniques can be supervised, unsupervised or semi-supervised. Being agnostic to the target variable, unsupervised approaches generally ignore the discriminative power of features, so their use may lead to poor performance. In contrast, supervised feature selection algorithms benefit from abundant labeled examples, so they effectively select the subset of relevant characteristics.
In semi-supervised feature selection \cite{Sheikhpour:2017}, the aim is to exploit both available labeled and unlabeled training observations in order to provide a solution that preserves important structures of data and leads to high performance.  Considerable progress has been made in this direction over the last few years. \emph{Filter methods}  \cite{Yang:2010,Zhao:2008} score features following a criterion and perform selection before a learning model is constructed. \emph{Embedded techniques} \cite{Chen:2017} perform model-based feature selection in order to infer the importance of features during the training process. Finally, \emph{Wrapper approaches} \cite{Kohavi:1997,Ren:2008} use a learner to effectively find a subset of features that are discriminatively powerful together. The underlying principle of these approaches is to search a feature subset by optimizing the prediction performance of a given learning model. However, the bottleneck is the optimization of these approaches as well as the use of information provided by unlabeled training samples. One way to tackle the latter is to increase data diversity, by pseudo-labeling unlabeled examples using either self-learning or co-training approaches \cite{Blum:1998,Tur:2005,Amini:2009,Vittaut02}. In this case, pseudo-labels are iteratively assigned to unlabeled examples with confidence score above a certain threshold. However, fixing this threshold is in some extent tricky. In addition, the pseudo-labels may be prone to error and their use become therefore more difficult and hazardous. On the other hand, using exhaustive search for its optimization would be computationally infeasible due to an exponential number of possible subsets. In addition, the use of sequential search algorithms like the one proposed by \cite{Ren:2008}, in the case of very large dimension, becomes also infeasible. To overcome this problem, a common practice is to use heuristic search algorithms, for instance, a genetic algorithm \cite{Goldberg:1988}. However, for applications of large dimension, this approach may have large variance in output, and the set of selected features might be still large.

In this paper, we propose a new framework for semi-supervised wrapper feature selection, referred to as \SW{}. The approach is based on data-augmentation obtained by a self-learning algorithm applied to unlabeled training samples \cite{Feofanov:2019}. The mislabeling errors of pseudo-labels assigned to unlabeled training data are estimated using a probabilistic model. Based on these estimates, we derive a new upper bound of the Bayes classifier that extends the $\mathcal{C}$-bound, proposed by \cite{Lacasse:2007} for the Bayes risk in the supervised case. Our bound is obtained by considering the mean and the variance of the margin predictions of the classifier over unlabeled samples.  Finally, we propose a modification of a genetic algorithm that uses this bound as a selection criterion by taking into account feature weights during the optimization phase.

In the reminder of the paper, we introduce in Section \ref{sec:framework} the theoretical framework and derive a new $\mathcal{C}$-bound with partially labeled data by modeling the mislabeling errors. In Section \ref{sec:algorithm}, we describe our semi-supervised method to select relevant features and we present the experimental results we obtained on ten benchmarks in Section \ref{sec:exp}. Finally, we discuss the outcomes of this study in Section \ref{sec:conclusion}.

\section{Probabilistic $\mathcal{C}$-Bound with Imperfect Labels}
\label{sec:framework}
\subsection{Framework}

We consider multi-class classification problems with an input space $\mathcal{X}\subset \R^d$ and an output space $Y~\in~\mathcal{Y}~=~\{1,\dots,K\}$, $K\geq 2$. We denote by $\mbf{X}=(X_1,\ldots,X_d)\in\mathcal{X}$ (resp. $Y\in\mathcal{Y}$)  an input (resp. output) random variable and $\mbf{X}_{[S]}$ the input projection on a subset of features $S\subset D=\{1,\dots,d\}$. As we are dealing with semi-supervised learning, we assume available a set of labeled training examples $\mathrm{Z}_{\mathcal{L}}=\{(\mathbf{x}_i,y_i)\}_{i=1}^l\in(\mathcal{X}\times\mathcal{Y})^l$, identically and independently distributed  (i.i.d.) with respect to a fixed yet unknown probability distribution $P(\mbf{X}, Y)$ over $\mathcal{X}\times\mathcal{Y}$, and a set of unlabeled training examples $\mathrm{X}_{\sss\mathcal{U}} = \{\mathbf{x}'_i\}_{i=l+1}^{l+u}\in \mathcal{X}^u$ supposed to be drawn i.i.d. from the marginal distribution $P(\mathbf{X})$, over the domain $\mathcal{X}$. 

Following \cite{Koller:1996}, we call $\mbf{X}_{[S]}$  a Markov blanket for the output $Y$ if $Y\perp \mbf{X}_{[D\setminus S]}|\mbf{X}_{[S]}$, and formulate the goal of semi-supervised feature selection as to find a minimal Markov blanket among all possible $2^d$ feature subsets based on the available labeled and unlabeled data. 
By searching a solution that is Markov blanket we exclude all irrelevant to $Y$ variables. The minimal property indicates the removal of redundant variables.

In this work, a fixed class of classifiers $\mathcal{H}~=~\{h~|~h~:~\mathcal{X}~\rightarrow~\mathcal{Y}\}$, called the \emph{hypothesis space}, is considered and defined without reference to the training set. Further, we focus on the \emph{Bayes classifier} (also called the majority vote classifier) defined as~:
\begin{equation}
\label{eq:Bayes-classifier-multi}
\forall \mathbf{x}\in \mathcal{X}; B_Q(\mathbf{x}):= \argmax_{c\in\{1,\ldots,K\}}\left[\E_{h\sim Q}\I{h(\mathbf{x})=c}\right].
\end{equation}

We formulate the task of the learner as to choose a posterior distribution $Q$ over $\mathcal{H}$ observing the training set $\mathrm{Z}_{\mathcal{L}}\cup~\mathrm{X}_{\sss\mathcal{U}}$ such that the true risk of the classifier is minimized:
\begin{equation*}
R(B_Q) := \E_{P(\mbf{X},Y)}\I{B_Q(\mathbf{X})\neq Y}.
\end{equation*}

We are increasing the labeled set by pseudo-labeling observations in which we have a strong confidence for the prediction. It is measured through the \emph{margin}, defined in the following way for  an observation $\mathbf{x}$:
\begin{align*}
M_Q(\mathbf{x}, y) &:= \E_{h\sim Q}\I{h(\mathbf{x})=y} - \max_{\substack{{c\in\mathcal{Y}}\\{c\neq y}}} \E_{h\sim Q}\I{h(\mathbf{x})=c}\\
&= v_Q(\mbf{x},y) - \max_{\substack{{c\in\mathcal{Y}}\\{c\neq y}}} v_Q(\mbf{x},c),    
\end{align*}
where $v_Q(\mbf{x}, c)$ is the \emph{vote} given by the Bayes classifier $B_Q$ to the class membership of an example $\mbf{x}$ being $c$.
If it is strictly positive for an example $\mbf{x}$, then the example is correctly classified.

\subsection{$\mathcal{C}$-Bound}
\label{sec:c-bound}

\cite{Lacasse:2007} proposed to upper bound the Bayes risk by taking into account the mean and the variance of the prediction margin. \cite[Theorem 3]{Laviolette:2014} extended this bound to 
the multi-class case. We derive a new  bound in the probabilistic setting by explicitly taking into account the posterior probability of classes $P(Y~=~y~|~\mbf{X}=~\mbf{x})$ in the risk of the Bayes classifier~:
\begin{align*}
R(B_Q) &:= \E_{P(\mbf{X})}(r_{B_Q}(\mbf{x})) \\
&:=\E_{P(\mbf{X})}\left(\sum_{\substack{{i\in\{1,\dots,K\}}\\{i\neq B_Q(\mbf{x})}}}P(Y=i|\mbf{X}=\mbf{x})\right)
\end{align*}
where $r_{B_Q}(\mbf{x})$ denotes the Bayes risk in classifying an observation $\mbf{x}$.



\begin{thm}
\label{thm:prob-cbound}
Let $M$ be a random variable such that $[M|\mbf{X}=\mbf{x}]$ is a discrete random variable that is equal to the margin $M_Q(\mbf{x}, i)$ with probability $P(Y=i|\mbf{X}=\mbf{x})$, $i=\{1,\dots,K\}$. Let $\mu^{M}_1$ and $\mu^{M}_2$ be respectively the first and the second statistical moments of the random variable $M$.
Then, for all $Q$ on a hypothesis space $\mathcal{H}$, and for all distributions $P(\mbf{X})$ over $\mathcal{X}$ and $P(Y|\mbf{X})$ over $\mathcal{Y}$, such that $\mu^M_1>0$, we have:
\begin{align}
    R(B_Q) \leq 1 - \frac{(\mu^M_1)^2}{\mu^M_2}. \label{eq:prob-cbound}
\end{align}
\end{thm}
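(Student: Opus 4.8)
The plan is to recognize that the probabilistic Bayes risk $R(B_Q)$ is bounded by the probability that the margin variable $M$ is non-positive, and then to control that probability through its first two moments by a one-sided Cauchy--Schwarz (Cantelli) argument, which is the mechanism underlying the classical $\mathcal{C}$-bound.

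First I would pin down the sign of the per-class margin. Fix $\mbf{x}$ and a class $i$. Since $B_Q(\mbf{x})$ is a maximizer of $v_Q(\mbf{x},\cdot)$, for every $i\neq B_Q(\mbf{x})$ the label $B_Q(\mbf{x})$ is among the competitors in $\max_{c\neq i} v_Q(\mbf{x},c)$, so $M_Q(\mbf{x},i)=v_Q(\mbf{x},i)-\max_{c\neq i}v_Q(\mbf{x},c)\leq v_Q(\mbf{x},i)-v_Q(\mbf{x},B_Q(\mbf{x}))\leq 0$. Hence, conditionally on $\mbf{X}=\mbf{x}$, the event $\{M\leq 0\}$ contains every draw with $i\neq B_Q(\mbf{x})$, so $r_{B_Q}(\mbf{x})=\sum_{i\neq B_Q(\mbf{x})}P(Y=i\mid\mbf{X}=\mbf{x})\leq \Pr(M\leq 0\mid\mbf{X}=\mbf{x})$. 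Taking expectation over $P(\mbf{X})$ gives $R(B_Q)\leq \Pr(M\leq 0)$, with equality whenever the vote admits no ties.

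Second, I would bound $\Pr(M\leq 0)$ by the moments. Since $M\,\I{M\leq 0}\leq 0$ pointwise, we have $\mu^M_1=\E[M]\leq\E[M\,\I{M>0}]$, and the assumption $\mu^M_1>0$ forces the right-hand side to be strictly positive. Applying Cauchy--Schwarz to the product $M\cdot\I{M>0}$ yields $\bigl(\E[M\,\I{M>0}]\bigr)^2\leq\E[M^2]\,\E[\I{M>0}]=\mu^M_2\,\Pr(M>0)$. Chaining the two facts gives $(\mu^M_1)^2\leq\mu^M_2\,\Pr(M>0)$, hence $\Pr(M>0)\geq(\mu^M_1)^2/\mu^M_2$, and therefore $R(B_Q)\leq\Pr(M\leq 0)=1-\Pr(M>0)\leq 1-(\mu^M_1)^2/\mu^M_2$, which is the claim.

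The step I expect to be most delicate is the first one: making the sign characterization of $M_Q(\mbf{x},i)$ rigorous and aligning it with the misclassification event. One must in particular be careful about ties in the $\argmax$ defining $B_Q$, where several classes share the top vote and the margin of the selected label vanishes. Fortunately such ties only add mass to $\{M\leq 0\}$, so they turn the identity into the inequality $R(B_Q)\leq\Pr(M\leq 0)$, which is exactly the direction we need; once this reduction is secured, the moment bound is routine.
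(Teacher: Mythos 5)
Your proof is correct, and it follows the same two-step skeleton as the paper's proof --- reduce the Bayes risk to the tail probability $P(M\leq 0)$, then control that tail by the first two moments of $M$ --- but it differs in both steps in ways worth noting. For the reduction, the paper asserts the exact identity $r_{B_Q}(\mbf{x})=\sum_{i=1}^K P(Y=i|\mbf{X}=\mbf{x})\I{M_Q(\mbf{x},i)\leq 0}=P(M\leq 0|\mbf{X}=\mbf{x})$, which silently assumes the argmax defining $B_Q(\mbf{x})$ has no ties (under a tie the margin of the predicted class is $0$, the indicator picks it up, and the identity degrades to ``$\leq$''); you handle this explicitly and settle for the one-sided inequality $R(B_Q)\leq P(M\leq 0)$, which is all the theorem needs and is strictly more rigorous. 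For the tail bound, the paper invokes the Cantelli--Chebyshev lemma with $a=\mu^M_1$, while you rederive the bound from scratch via Cauchy--Schwarz applied to $M\,\I{M>0}$, using $\mu^M_1\leq\E[M\,\I{M>0}]$ and $\I{M>0}^2=\I{M>0}$; the two are mathematically equivalent here, since Cantelli at $a=\mu$ gives $\sigma^2/(\sigma^2+\mu^2)=1-(\mu^M_1)^2/\mu^M_2$ exactly, because $\sigma^2+(\mu^M_1)^2=\mu^M_2$. What each route buys: the paper's is shorter once the lemma is granted, whereas yours is self-contained and elementary (no external concentration lemma), and as a bonus it cleanly isolates the degenerate tie case that the paper glosses over.
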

\begin{proof}
    At first, we show that $R(B_Q) = P(M\leq 0)$.
    One can notice that $r_{B_Q}(\mbf{x}) = P(M\leq 0|\mbf{X}=\mbf{x})$:
    \begin{align*}
        r_{B_Q}(\mbf{x})& = \sum_{\substack{{i\in\{1,\dots,K\}}\\{i\neq B_Q(\mbf{x})}}}P(Y=i|\mbf{X}=\mbf{x}) \\ 
        &= \sum_{i=1}^K P(Y=i|\mbf{X}=\mbf{x})\I{M_Q(\mbf{x},i)\leq 0} \\
        &= P(M\leq 0|\mbf{X}=\mbf{x}).  
    \end{align*}

    Applying the total probability law, we obtain:
    \begin{align}
    P(M\leq 0) &= \int_{\mathcal{X}} P(M\leq 0|\mbf{X}=\mbf{x}) P(\mbf{X}=\mbf{x})\diff\mbf{x} \nonumber\\ 
    &= \E_{P(\mbf{X})} P(M\leq 0|\mbf{X}=\mbf{x}) = R(B_Q). \label{eq:bayes-risk-via-prob-margins}
    \end{align}
    We remind the Cantelli-Chebyshev inequality:
    \begin{lem}
    \label{lem:cantelli-chebyshev}
        Let $Z$ be a random variable with the mean $\mu$ and the variance $\sigma^2$. Then, for every $a>0$, we have:
        \[
        P(Z\leq \mu - a) \leq \frac{\sigma^2}{\sigma^2 + a^2}.
        \]
    \end{lem}
    By taking $Z := M$, and $a = \mu^M_1$, we apply Lemma \ref{lem:cantelli-chebyshev} and deduce:
    \begin{align}
       P(M\leq 0) &\leq \frac{\mu^M_2 - (\mu^M_1)^2}{\mu^M_2 - (\mu^M_1)^2 + (\mu^M_1)^2 } = 1 - \frac{(\mu^M_1)^2}{\mu^M_2}. \label{eq:prob-M-less-0-bound}
    \end{align}
    Combining Eq. \eqref{eq:bayes-risk-via-prob-margins} and Eq. \eqref{eq:prob-M-less-0-bound} lead to the bound.
\end{proof}

Now, we show how to evaluate the Bayes risk in the case where we have an imperfect output $\hat{Y}$ with a different distribution than the true output $Y$. Further, we assume that the Bayes classifier is optimal in terms of risk minimization, i.e. it is equivalent to the maximum a posteriori rule:
\[
B_{Q_{\text{opt}}}(\mbf{x}) = \argmax_{c\in\mathcal{Y}}P(Y=c|\mbf{X}=\mbf{x}).
\]
Then, the Bayes risk for $Y$ and $\hat{Y}$ can be written as:
\begin{align*}
    R(B_{Q_{\text{opt}}}) &= \E_{P(\mbf{X})} r(X) = 1 - \max_{j=1,\dots, K}P(Y=j|\mbf{X});\\
    \hat{R}(B_{Q_{\text{opt}}}) &:= \E_{P(\mbf{X})} \hat{r}(X) := 1 - \max_{i=1,\dots, K}P(\hat Y=i|\mbf{X}).    
\end{align*}
Let the label imperfection be described by the following probability distribution:
\begin{align*}
    P(\hat Y=i|Y=j) &:= p_{i,j} \quad\forall(i,j)\in \{1,\dots,K\}^2\\
    \text{s.t.}\quad \sum_{i=1}^K p_{i,j} &= 1.
\end{align*}
We assume additionally that $P(\mbf{X}|Y) = P(\mbf{X}|Y, \hat Y)$.

\begin{prop}
\label{prop:tight-ci-bound}
Let $\hat{M}$ be a random variable such that $[\hat{M}|\mbf{X}=\mbf{x}]$ is a discrete random variable that is equal to the margin $M_Q(\mbf{x}, i)$ with probability $P(\hat{Y}=i|\mbf{X}=\mbf{x})$, $i=\{1,\dots,K\}$.
Let $\mu^{\hat{M}}_1$ and $\mu^{\hat{M}}_2$ be respectively the first and the second statistical moments of the random variable $\hat{M}$.
Then, for all distributions $P(\mbf{X})$ over $\mathcal{X}$ and $P(Y|\mbf{X})$, $P(\hat{Y}|\mbf{X})$ over $\mathcal{Y}$, such that $\mu^{\hat{M}}_1>0$, we have:
\begin{align}
     R(B_{Q_{\text{opt}}}) &\leq 1 - \frac{1}{\beta}\left[ \frac{(\mu^{\hat{M}}_1)^2}{\mu^{\hat{M}}_2}\right],\label{eq:prob-cbound-imperfect}
\end{align}
where $\beta := \max_{i=\{1,\dots,K\}} \sum_{j=1}^K p_{i,j}$.
\end{prop}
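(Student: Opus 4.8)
The plan is to reduce the statement to Theorem~\ref{thm:prob-cbound} applied to the imperfect-label margin $\hat{M}$, after controlling the gap between the true Bayes risk $R(B_{Q_{\text{opt}}})$ and its imperfect-label counterpart $\hat{R}(B_{Q_{\text{opt}}})$ by the factor $\beta$. Concretely, I would first obtain a risk comparison of the form $R(B_{Q_{\text{opt}}}) \le 1 - \tfrac{1}{\beta}\big(1-\hat{R}(B_{Q_{\text{opt}}})\big)$, and then plug in the $\mathcal{C}$-bound on $\hat{R}(B_{Q_{\text{opt}}})$.

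First I would express the imperfect posterior as a $p_{i,j}$-weighted mixture of the true posterior. Writing $P(\hat{Y}=i|\mbf{X}=\mbf{x}) = \sum_{j=1}^K P(\hat{Y}=i|Y=j,\mbf{X}=\mbf{x})\,P(Y=j|\mbf{X}=\mbf{x})$ and invoking the assumption $P(\mbf{X}|Y)=P(\mbf{X}|Y,\hat{Y})$, which yields $\hat{Y}\perp\mbf{X}\mid Y$ and hence $P(\hat{Y}=i|Y=j,\mbf{X}=\mbf{x})=p_{i,j}$, I get $P(\hat{Y}=i|\mbf{X}=\mbf{x}) = \sum_{j} p_{i,j}\,P(Y=j|\mbf{X}=\mbf{x})$. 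Bounding each summand by $\big(\max_{j'}P(Y=j'|\mbf{X}=\mbf{x})\big)\,p_{i,j}$ and using $\sum_j p_{i,j}\le \beta$ gives the pointwise domination $\max_i P(\hat{Y}=i|\mbf{X}=\mbf{x}) \le \beta\,\max_{j'}P(Y=j'|\mbf{X}=\mbf{x})$. Integrating over $P(\mbf{X})$ and recalling the definitions $1-R(B_{Q_{\text{opt}}})=\E_{P(\mbf{X})}\max_j P(Y=j|\mbf{X})$ and $1-\hat{R}(B_{Q_{\text{opt}}})=\E_{P(\mbf{X})}\max_i P(\hat{Y}=i|\mbf{X})$ turns this into $1-\hat{R}(B_{Q_{\text{opt}}}) \le \beta\big(1-R(B_{Q_{\text{opt}}})\big)$, i.e. the desired risk comparison.

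Finally I would apply Theorem~\ref{thm:prob-cbound} to $\hat{M}$, which is defined exactly as $M$ but with the posterior of $\hat{Y}$; this gives $\hat{R}(B_{Q_{\text{opt}}}) = P(\hat{M}\le 0) \le 1 - (\mu^{\hat{M}}_1)^2/\mu^{\hat{M}}_2$, equivalently $1-\hat{R}(B_{Q_{\text{opt}}}) \ge (\mu^{\hat{M}}_1)^2/\mu^{\hat{M}}_2$. Substituting this lower bound on $1-\hat{R}(B_{Q_{\text{opt}}})$ into the comparison of the previous paragraph, and tracking that dividing by $\beta>0$ preserves the sense of the inequality, yields $R(B_{Q_{\text{opt}}}) \le 1 - \tfrac{1}{\beta}\big(1-\hat{R}(B_{Q_{\text{opt}}})\big) \le 1 - \tfrac{1}{\beta}\,(\mu^{\hat{M}}_1)^2/\mu^{\hat{M}}_2$, which is exactly~\eqref{eq:prob-cbound-imperfect}.

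I expect the main obstacle to be the second step: establishing the pointwise domination $\max_i P(\hat{Y}=i|\mbf{X}) \le \beta\,\max_j P(Y=j|\mbf{X})$ with the correct constant $\beta=\max_i\sum_j p_{i,j}$, and verifying that the conditional-independence assumption is precisely what licenses the mixture representation of $P(\hat{Y}|\mbf{X})$. A secondary point requiring care is the bookkeeping of inequality directions when chaining, since the final estimate combines an \emph{upper} bound on $R(B_{Q_{\text{opt}}})$ with a \emph{lower} bound on $1-\hat{R}(B_{Q_{\text{opt}}})$; reversing either one would break the argument.
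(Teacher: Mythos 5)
Your proposal is correct and follows essentially the same route as the paper's proof: the pointwise domination $\max_i P(\hat{Y}=i|\mathbf{x}) \le \beta \max_j P(Y=j|\mathbf{x})$ that you derive from the mixture representation of $P(\hat{Y}|\mathbf{X})$ is exactly the paper's inequality $\hat{r}(\mathbf{X}) \ge (1-\beta) + \beta\, r(\mathbf{X})$ (attributed there to Chittineni), and the conclusion is then obtained, as you do, by applying Theorem \ref{thm:prob-cbound} to $\hat{M}$ and chaining the two bounds. Your explicit justification of the mixture step via the assumption $P(\mathbf{X}|Y)=P(\mathbf{X}|Y,\hat{Y})$ is a welcome clarification of a step the paper leaves implicit, but it does not change the substance of the argument.
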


\begin{proof}
    To prove this proposition, we remind the inequality proposed by \cite[Section 3.2.1.]{Chittineni:1980}:
    \begin{align}
    \hat{R}(B_{Q_{\text{opt}}}) &\geq (1-\beta) + \beta R(B_{Q_{\text{opt}}}). \label{eq:imperfect-lower-bound}
    \end{align}
    Let $c_{\mbf{X}} := \max_{j=\{1,\dots,K\}}P(Y=j|\mbf{X})$ be the output of the Bayes classifier learnt on training examples with the true labels. Then, we have
    \begin{align}
    \hat{r}(\mbf{X}) &= 1 - \max_{i=\{1,\dots,K\}}\left[\sum_{j=1}^K p_{i,j}P(Y=j|\mbf{X})\right] \nonumber\\
    &\geq 1 - c_{\mbf{X}}\left[\max_{i=\{1,\dots,K\}}\left(\sum_{j=1}^K p_{i,j}\right)\right] \nonumber\\
    &= 1 - c_{\mbf{X}}\beta \nonumber\\
    &= (1 - \beta) + \beta r(\mbf{X}). \nonumber
    \end{align}
    By taking the expectation with respect to $P(\mbf{X})$, we obtain Inequality \eqref{eq:imperfect-lower-bound}.
    
    Thus, Inequality \eqref{eq:prob-cbound-imperfect} is directly obtained from Theorem \ref{thm:prob-cbound} and Inequality \eqref{eq:imperfect-lower-bound}.
\end{proof}

Proposition \ref{prop:tight-ci-bound} can be applied for semi-supervised learning: it upper bounds the risk when the training labeled set is augmented by imperfect pseudo-labeled  unlabeled training data. In comparison to the transductive bound of \cite{Feofanov:2019}, this bound is  tighter  in practice when the number of classes $K$ is greater than two, as we estimate the Bayes risk directly and not from the conditional risk. Given an imperfect label, our bound evaluates the $\mathcal{C}$-bound in this "noisy" case. Then, using the notion of $\beta$
, a correction of the bound is performed making the estimate more conservative. One can notice that $\beta=1$ 
 when there is no mislabeling.
 
\subsection{Bound for Feature Selection}

We propose another application of the bound to select relevant features given partially labeled data and a pseudo-labeling mechanism. The bound is then used as a selection criterion: we choose a feature subset that yields to the minimal value of the $\mathcal{C}$-bound estimated by training labeled and pseudo-labeled examples. 
However, the value of $\beta$ does not reflect the whole mislabeling matrix, but rather one of its row. This will have a negative impact on the selection criterion when the number of classes will be more than two.
To overcome this problem, we propose  the following theorem.

\begin{thm}
\label{thm:prob-cbound-imperfect}
Let $\hat{M}$ be a random variable such that $[\hat{M}|\mbf{X}=\mbf{x}]$ is a discrete random variable that is equal to the margin $M_Q(\mbf{x}, i)$ with probability $P(\hat{Y}=i|\mbf{X}=\mbf{x})$, $i=\{1,\dots,K\}$.
Let $\mu^{\hat{M}}_1$ and $\mu^{\hat{M}}_2$ be respectively the first and the second statistical moments of the random variable $\hat{M}$.
Then, for all distributions $P(\mbf{X})$ over $\mathcal{X}$ and $P(Y|\mbf{X})$, $P(\hat{Y}|\mbf{X})$ over $\mathcal{Y}$, such that $\mu^{\hat{M}}_1>0$, we have:
\begin{align}
    R(B_{Q_{\text{opt}}}) &\leq 1 - \frac{1}{\gamma}\left[ \frac{(\mu^{\hat{M}}_1)^2}{\mu^{\hat{M}}_2}\right],\label{eq:prob-cbound-imperfect-2}
\end{align}
where $\gamma := \sum_{j=1}^K \max_{i=\{1,\dots,K\}}p_{i,j}$.
\end{thm}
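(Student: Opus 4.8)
The plan is to mirror the proof of Proposition~\ref{prop:tight-ci-bound} verbatim, replacing the row-sum constant $\beta$ by the column-max constant $\gamma$, which requires sharpening only the single pointwise step that relates the imperfect conditional risk $\hat{r}(\mbf{X})$ to the true one $r(\mbf{X})$. Concretely, I would again start from
\[
\hat{r}(\mbf{X}) = 1 - \max_{i=\{1,\dots,K\}}\left[\sum_{j=1}^K p_{i,j}P(Y=j|\mbf{X})\right]
\]
and aim to establish the analogue of Inequality~\eqref{eq:imperfect-lower-bound}, namely the pointwise lower bound $\hat{r}(\mbf{X}) \geq (1-\gamma) + \gamma\, r(\mbf{X})$, which after taking the expectation over $P(\mbf{X})$ yields $\hat{R}(B_{Q_{\text{opt}}}) \geq (1-\gamma) + \gamma\, R(B_{Q_{\text{opt}}})$.

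The key difference, and the one step I expect to be the main obstacle, is the \emph{order} in which the two bounds are applied inside the maximization. In Proposition~\ref{prop:tight-ci-bound} one factors out $c_{\mbf{X}} := \max_j P(Y=j|\mbf{X})$ first and then sums the row $\sum_j p_{i,j}$, which produces $\beta$. Instead I would first move the maximization inside the sum column by column, using $p_{i,j}\leq \max_{i'} p_{i',j}$ for each fixed $j$ to obtain
\[
\max_i \sum_j p_{i,j}P(Y=j|\mbf{X}) \leq \sum_j \left(\max_{i'} p_{i',j}\right) P(Y=j|\mbf{X}),
\]
and only then bound each $P(Y=j|\mbf{X})$ by $c_{\mbf{X}}$, which factors out and leaves precisely $\sum_j \max_{i'} p_{i',j} = \gamma$. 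Combined with $c_{\mbf{X}} = 1 - r(\mbf{X})$, this gives $\hat{r}(\mbf{X}) \geq 1 - c_{\mbf{X}}\gamma = (1-\gamma) + \gamma\, r(\mbf{X})$, as desired. The subtlety is entirely in resisting the temptation to pull out $c_{\mbf{X}}$ too early: doing so forces the estimate back to a row sum and collapses $\gamma$ into $\beta$.

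Finally, I would close the argument exactly as in Proposition~\ref{prop:tight-ci-bound}: apply Theorem~\ref{thm:prob-cbound} to the discrete margin variable $\hat{M}$ (whose law uses $P(\hat{Y}=i|\mbf{X})$), which gives $\hat{R}(B_{Q_{\text{opt}}}) = P(\hat{M}\leq 0) \leq 1 - (\mu^{\hat{M}}_1)^2/\mu^{\hat{M}}_2$, chain this with the $\gamma$-version of the lower bound just derived, and solve the resulting two-sided inequality for $R(B_{Q_{\text{opt}}})$. Everything downstream of the bounding step is routine algebra together with a direct invocation of the already-proved Theorem~\ref{thm:prob-cbound}.
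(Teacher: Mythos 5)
Your proof is correct, but it takes a different route from the paper. The paper's proof of Theorem~\ref{thm:prob-cbound-imperfect} is a one-liner: it simply observes that $\beta = \max_{i}\sum_{j=1}^K p_{i,j} \leq \sum_{j=1}^K \max_{i} p_{i,j} = \gamma$, so the bound of Proposition~\ref{prop:tight-ci-bound} (which has $\tfrac{1}{\beta}$) immediately implies the weaker bound with $\tfrac{1}{\gamma}$, since replacing $\tfrac{1}{\beta}$ by the smaller $\tfrac{1}{\gamma}$ only loosens the right-hand side. You instead re-derive the Chittineni-type inequality from scratch with $\gamma$ in place of $\beta$, by applying $p_{i,j}\leq\max_{i'}p_{i',j}$ column by column before factoring out $c_{\mbf{X}}$; this pointwise chain and the final combination with Theorem~\ref{thm:prob-cbound} are all valid, so the argument goes through. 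One remark on your framing: the ``temptation'' you warn against (pulling out $c_{\mbf{X}}$ first) is not actually an obstacle --- doing so yields the \emph{tighter} constant $\beta$, i.e.\ Proposition~\ref{prop:tight-ci-bound} itself, and the theorem then follows by exactly the paper's one-line relaxation $\beta\leq\gamma$. So the step you identify as the crux is in fact optional; your version buys a self-contained proof that never cites Proposition~\ref{prop:tight-ci-bound} as a black box, while the paper's version is shorter and makes explicit that Proposition~\ref{prop:tight-ci-bound} is the strictly tighter of the two bounds.
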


\begin{proof}
This Theorem is straightforward from Proposition \ref{prop:tight-ci-bound} by noticing that $\max_{i=\{1,\ldots,K\}} \sum_{j=1}^K p_{i,j} \leq \sum_{j=1}^K \max_{i=\{1,\ldots,K\}} p_{i,j}$.
\end{proof}

Despite Proposition \ref{prop:tight-ci-bound} provides a tighter bound, we have found that the use of Inequality \eqref{eq:prob-cbound-imperfect-2} is more stable as feature selection criterion.


\section{Wrapper with Imperfect Labels}
\label{sec:algorithm}
We present a new framework for wrapper feature selection using both labeled and unlabeled data  based on the probabilistic framework with mislabeling errors presented above.

\subsection{Framework}

The proposed \SW{} approach starts from a supervised Bayes classifier initially trained on available labeled examples. Then, it iteratively retrains the classifier by assigning pseudo-labels at each iteration to unlabeled examples that have prediction vote above a certain threshold.
 \cite{Feofanov:2019} proposed to obtain the threshold  $\bm{\theta} = (\theta_i)_{i=1}^K$ dynamically by minimizing the \emph{conditional Bayes error} $R_\mathcal{U|\boldsymbol{\theta}}(B_Q)$. We denote this algorithm further by \texttt{SLA}.
 



As a result, we obtain a new augmented training set that increases the diversity of training examples. However, the votes of the classifier are possibly biased and the pseudo-labeled examples contain mislabeling errors. 
In this sense, we propose a wrapper strategy that performs feature selection by modeling these mislabeling errors. In other words, we search a feature subset in the space of possible subsets that minimizes the $\mathcal{C}$-bound with imperfect labels (Theorem \ref{thm:prob-cbound-imperfect}). The bound is estimated using the augmented training set. In order to solve the optimization problem, we perform a heuristic search using a genetic algorithm. The whole strategy is depicted in Figure~\ref{fig:WFS}. 
\begin{figure}[t!]
\centering
\includegraphics[width=.9\textwidth]{./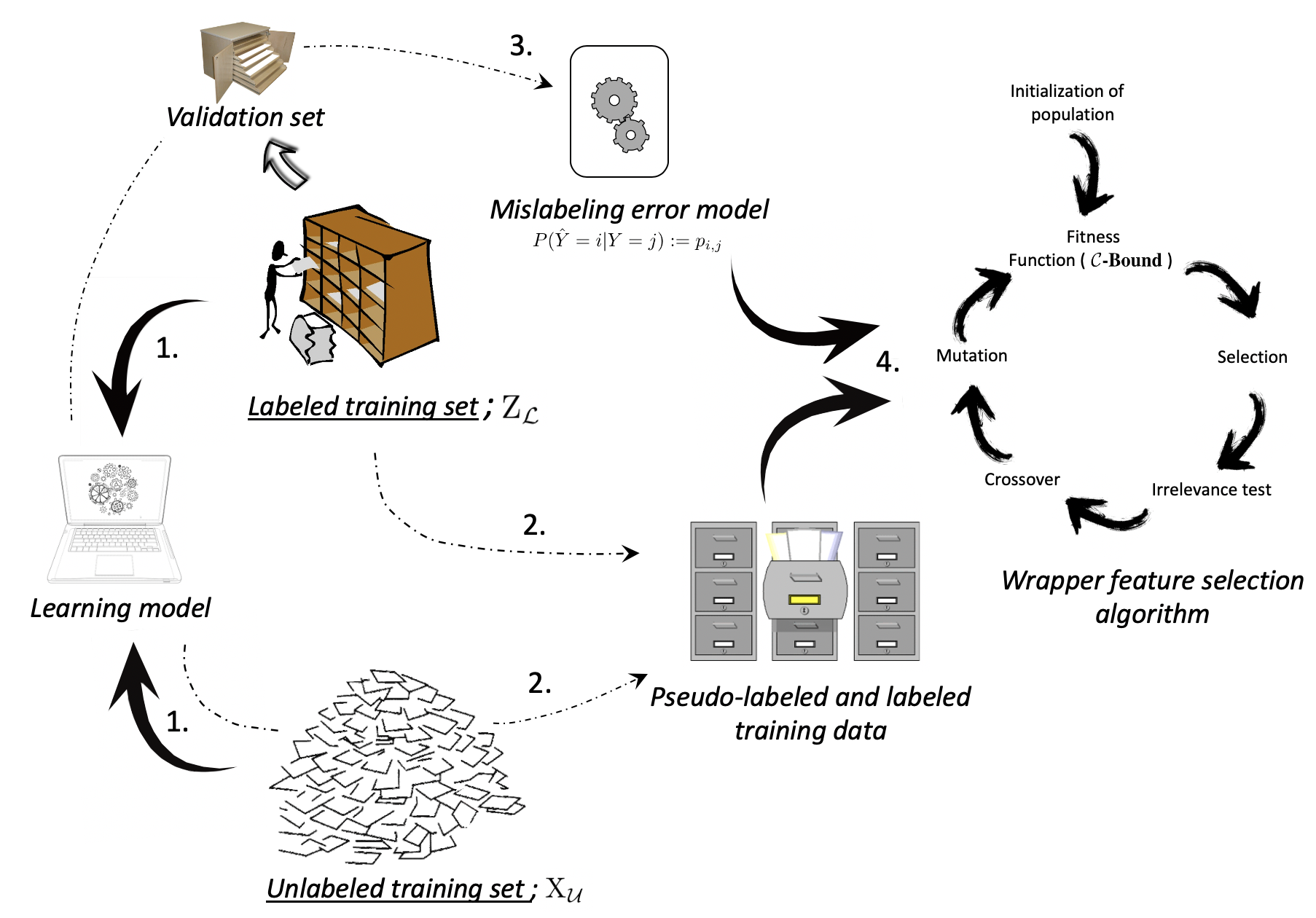}
    \caption{Pictorial pseudo-code of \SW. A Bayes classifier is first learned using labeled and unlabeled training sets. Unlabeled training data with predictions higher than an automatically found threshold are pseudo-labeled; and using a validation set, class probability errors of this pseudo-labeling scheme are estimated. Based on the augmented training data and the mislabeling probability errors a $\mathcal{C}$-Bound on the error of the Bayes classifier is found. This bound serves as a selection measure of the Fitness function used in a genetic algorithm for feature selection.  }
 \label{fig:WFS}
\end{figure}

\subsection{Genetic Algorithm and Its Limitations}

A genetic algorithm \cite{Goldberg:1988} is an evolutionary optimization algorithm inspired by the natural selection process. A \textit{fitness function} is optimized by evolving iteratively a \textit{population} of \textit{candidates} (in our case, binary representation of possible feature subsets).

Starting from a randomly drawn population, the algorithm produces iteratively new populations, called \textit{generations}, by preserving  \textit{parents}, $p$ candidates with best fitness, and creating \textit{offspring} from parents using operation of crossover and mutation (Figure \ref{fig:ga-new-child}). After a predefined number of generations the algorithm is stopped, and a candidate with the best fitness in the last population is returned. Further, we call this algorithm as the classic genetic algorithm,  \texttt{CGA}.

\texttt{CGA} can be very effective for the wrapper feature selection when the number of features is very large. However, there might be several limitations. Firstly, the algorithm may have a large variance in giving results depending on the initialization of population. Therefore, it needs usually a large number of generations to have a stable output.

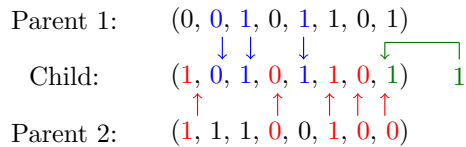
\begin{figure}[ht!]
    \centering
    \begin{tikzpicture}[scale=1.5, line width=0.15mm]
    
    \node at (0, 0) {(0, \blue{0}, \blue{1}, 0, \blue{1}, 1, 0, 1)};
    \node at (0, -0.5) {(\red{1}, \blue{0}, \blue{1}, \red{0}, \blue{1}, \red{1}, \red{0}, \green{1})};
    \node at (0, -1) {(\red{1}, 1, 1, \red{0}, 0, \red{1}, \red{0}, \red{0})};
    
    \node at (-2, 0) {Parent 1:};
    \node at (-2, -0.5) {Child:};
    \node at (-2, -1) {Parent 2:};
    
    \draw[->, color=blue] (0.12, -0.15) -- (0.12, -0.35);
    \draw[->, color=blue] (-0.6, -0.15) -- (-0.6, -0.35);
    \draw[->, color=blue] (-0.35, -0.15) -- (-0.35, -0.35);
    
    \draw[->, color=red] (-0.82, -0.85) -- (-0.82, -0.65);
    \draw[->, color=red] (-0.11, -0.85) -- (-0.11, -0.65);
    \draw[->, color=red] (0.35, -0.85) -- (0.35, -0.65);
    \draw[->, color=red] (0.6, -0.85) -- (0.6, -0.65);
    \draw[->, color=red] (0.84, -0.85) -- (0.84, -0.65);
    
    \draw[->, color=mygreen] (1.5, -0.2) -| (0.84, -0.35);
    \draw[-, color=mygreen] (1.5, -0.2) -- (1.5, -0.35);
    
    \node at (1.5, -0.5) {\green{1}};
    
    \end{tikzpicture}
    \caption{A simple scheme of how a new child is generated from two parents. The crossover procedure (red and blue colors) is followed by mutation (green color).}
    \label{fig:ga-new-child}
\end{figure}

Another problem would be that during the crossover a child inherits features from the parents at random; ignoring any information like feature importance. Because of that, a solution the algorithm outputs is generally not sparse as it could be. To produce a sparse solution, it is usually spread to limit the search space by fixing the number of selected features \cite{Persello:2016}. However, it is not very clear which number of features should be taken.

\subsection{Feature Selection Genetic Algorithm}

In this section, we describe a new genetic algorithm  for  feature selection in semi-supervised learning. The main idea of the algorithm is to take into account the importance of features during the generation of  a new population. It allows to output a sparse solution preserving discriminative power and not fixing the number of features.

First, we initialize the population $\mathcal{P}_0$ by randomly generating feature subsets of a fixed length. In our experiments, this length is equal to $\floor{\sqrt{d}}$. Each candidate $S\in\mathcal{P}_0$ is a feature subset (\textbf{initialization}). Then, for the generation $g\geq 0$ and for each candidate $S \in \mathcal{P}_g$, we train a supervised model and compute a score reflecting the strength of the subset. We  derive weights $w^{S}_1, \dots, w^{S}_d$ for each feature, using ensemble methods based on decision trees (\textbf{fitness and features weights computation}).

\noindent To accelerate the convergence and reduce the variance of the algorithm, we embed a test to eliminate irrelevant to response variables (\textbf{feature relevant test}). This idea bears similarity with the work of \cite{Tuv:2009} where variables are compared with their copies using randomly permuted values. For each feature, we compute the average weights:
    \[
    \bar{w}_t = \frac{\sum_{S: [S\in \mathcal{P}_g]\wedge[t\in S]}w_t^{S}}{\sum_{\tau=1}^d\sum_{S: [S\in \mathcal{P}_g]\wedge[\tau \in S]}w_\tau ^{S}},\quad t\in\{1,\dots,d\}.
    \]
    We detect suspicious irrelevant features that have average weights less than a fixed threshold $\theta_{\text{out}}$: $S_{\text{out}} = \{t: \bar{w}_t\leq \theta_{\text{out}}, t\in\{1,\dots,d\}\}$. 
    A new supervised model is learnt on a new data set, composed by: the relevant features detected so far by the best parent, suspicious irrelevant features and a randomly permuted copy of those suspicious irrelevant variables. If the difference between the weight of a suspicious  feature that belongs to $S_{\text{out}}$ and the weight of its noisy counterpart is not significant, the suspicious irrelevant feature is called irrelevant, removed and will not be further considered by the algorithm.

    \noindent Among the population $\mathcal{P}_g$, $p$ candidates with best fitness are selected, preserved for the next population $\mathcal{P}_{g+1}$ and used to produce new offspring (\textbf{parent selection}).

    \noindent A new child is generated by mating two parents (\textbf{crossover}). In contrast to \texttt{CGA}, we inherit variables according to their weights: for each parent, its features are sorted by their weights in the decreasing order. The crossover point that characterizes the proportion of features inherited from the first parent is taken randomly, and we fill the child by its sorted features until we reach the quota. The rest of the features are taken from the second parent under a condition that there are no repetitions.

    \noindent To increase the diversity of candidates, we perform mutation of children in the same way as in \texttt{CGA}. In addition, we define a possibility to mutate the number of features in the subset. For each child, its length can be randomly increased, decreased or remain the same (\textbf{mutation}).
    
    \noindent We repeat these steps for several generations until we obtain the final population. Since we enforce our algorithm to start with $\sqrt{d}$ features, this number might be too small in some cases. To overcome this, we output a final candidate by  combining outputs of all candidates from the final population. The features are taken by voting and we threshold the ones that were present in few candidate subsets.
    
\begin{table*}[ht!]
\caption{Characteristics of data sets used in our experiments ordered by dimension $d$.}
\label{tab:data-set-description}
\centering
\hfill \break
\scalebox{0.8}
{
      \begin{tabular}{cccccc}
        \toprule
        Data set & \# of lab. examples, & \# of unlab.  examples, & \# of test examples, & Dimension, & \# of classes, \\
                 & $l$ & $u$ & $n_t$ & $d$ & $K$ \\
        \midrule
        \texttt{Protein} & 97 & 875 & 108 & 77 & 8 \\
        \texttt{Madelon} & 234 & 2106 & 260 & 500 & 2 \\ 
        \texttt{Isolet} & 140 & 1264 & 156 & 617 & 26 \\                
        \texttt{Fashion} & 99 & 9801 & 100 & 784 & 10 \\
        \texttt{MNIST} & 99 & 9801 & 100 & 784 & 10 \\
        \texttt{Coil20} & 130 & 1166 & 144 & 1024 & 20 \\
        \texttt{PCMAC} & 175 & 1574 & 194 & 3289 & 2 \\
        \texttt{RELATHE} & 128 & 1156 & 143 & 4322 & 2 \\
        \texttt{BASEHOCK} & 180 & 1614 & 199 & 4862 & 2 \\
        \texttt{Gisette} & 69 & 6861 & 70 & 5000 & 2 \\
        \bottomrule
      \end{tabular}}
\end{table*}

\section{Experimental Results}
\label{sec:exp}
To illustrate the benefit of our approach, we conducted a number of experiments  on 10 publicly available data sets \cite{Chang:2011,Guyon:2003:Design,Li:2018}. The associated applications are image recognition, with \texttt{Fashion},  \texttt{MNIST},  \texttt{Coil20} and \texttt{Gisette} data sets;  text classification databases  \texttt{PCMAC}, \texttt{RELATHE}, \texttt{BASEHOCK};  bioinformatics with \texttt{Protein} data set; feature selection with \texttt{Madelon}; and   a speech recognition task with \texttt{Isolet}.

The main characteristics of all data sets are summarized in Table~\ref{tab:data-set-description}. Since we are interested in practical use of the algorithm, we test the algorithms under the condition that $l\ll u$. For \texttt{MNIST} and \texttt{Fashion} data sets, we consider its subset of 10000 observations.

In all the experiments, we consider the Random Forest algorithm \cite{Breiman:2001}, denoted as \texttt{RF}, with 200 trees and the maximal depth of trees, as the Bayes classifier with the uniform posterior distribution. For an observation $\mbf{x}$, we evaluate the vector of class votes $\{v(\mbf{x}, i)\}_{i=1}^K$ by averaging over the trees the vote given to each class by the tree. A tree computes a class vote as the fraction of training examples in a leaf belonging to a class.

In all experiments, we use the \texttt{SLA} to evaluate the quality of selection. In other words, at first, we find a feature subset using a feature selection method, then we train \texttt{SLA} on the selected features and compute its performance.

For all experimental results, we perform 20 random (train/unlabeled/test) splits of the initial collection and report the average classification accuracy over the 20 trials on the unlabeled training set (\texttt{ACC-U}). In addition, for the final experiments, we report also the average accuracy on the test set (\texttt{ACC-T}). We set a time limit to 1 hour per split and terminate an algorithm if the limit is exceeded. These cases are indicated as \texttt{NA}.

We present results using the following notations: the bold face is used to emphasize the highest performance rate; by the symbol $\downarrow$ we indicate that the performance is significantly worse compared to the best result, according to Mann-Whitney U test \cite{Mann:1947} at the p-value level equal to 0.01.

\subsection{Impact of the Selection Criterion}

In Section \ref{sec:framework}, we propose to use the $\mathcal{C}$-bound with the imperfect labels (Theorem \ref{thm:prob-cbound-imperfect}), further denoted by \texttt{CBIL}, as a criterion to be minimized inside the wrapper. To validate this idea, we compare it with the following criteria: the Out-Of-Bag  error \texttt{OOB} of the Random Forest  \cite{Breiman:2001}, where for each tree the validation set comprises the training examples that were not included in the bootstrap sample; the transductive bound \texttt{TB} of \cite{Feofanov:2019};  and
the $\mathcal{C}$-bound computed without the $\gamma$-correction \texttt{CB}.

To evaluate the $\mathcal{C}$-bound, we approximate probabilities $P(Y=i|\mbf{X}=\mbf{x}),\ i\in\{1,\dots,K\}$ by the class vote $v(\mbf{x}, i)$ reflecting the confidence in predicting $i$. We estimate $\gamma$ on the labeled set by comparing the true labels with the corresponding out-of-bag predictions.

To eliminate the effect of the search scheme, for each data set, we have randomly chosen 40 feature subsets, and exhaustively performed feature selection using each of the aforementioned criterion. This setup corresponds to the initial step of a genetic algorithm. Then, we train \texttt{SLA} on the selected features and compute its performance.

\setlength{\tabcolsep}{0.65em}
\setlength\extrarowheight{3pt}
\begin{table}[ht!]
\caption{The classification performance on the unlabeled set of different selection criteria at the initialization step of \texttt{FSGA}. The ground truth \texttt{GT} represents the accuracy of the best possible choice among considered feature subsets.}
\label{tab:real-data-metrics-exp-res}

\hfill \break
\centering
\scalebox{0.58}{
\begin{tabular}{c||c|| c| c| c| c}
\toprule

Data set  & \texttt{GT} & \texttt{OOB} & 
\texttt{TB} & \texttt{CB} & \texttt{CBIL}\\

\midrule
\texttt{Protein}  & 
\textit{.665 $\pm$ .025 }& .615 $\pm$ .036 & .441$^\downarrow$ $\pm$ .06 & .613 $\pm$ .03 & \textbf{.622} $\pm$ .03\\
\midrule
\texttt{Madelon}  & 
\textit{.589 $\pm$ .016} & .564 $\pm$ .034 & .503$^\downarrow$ $\pm$ .014 & .568 $\pm$ .029 & \textbf{.57} $\pm$ .028\\
\midrule
\texttt{Isolet}  & 
\textit{.678 $\pm$ .019} & \textbf{.644} $\pm$ .039 & .435$^\downarrow$ $\pm$ .02 & .625 $\pm$ .027 & .633 $\pm$ .035\\
\midrule
\texttt{Fashion}  & 
\textit{.619 $\pm$ .014 }& .57 $\pm$ .031 & .543$^\downarrow$ $\pm$ .022 & .575 $\pm$ .029 & \textbf{.58} $\pm$ .023\\
\midrule
\texttt{MNIST}  & 
\textit{.549 $\pm$ .012} & .377$^\downarrow$ $\pm$ .028 & \textbf{.499} $\pm$ .034 & .451$^\downarrow$ $\pm$ .036 & .463$^\downarrow$ $\pm$ .038\\
\midrule
\texttt{Coil20}  &
\textit{.918 $\pm$ .012} & \textbf{.908} $\pm$ .021 & .866$^\downarrow$ $\pm$ .023 & .891 $\pm$ .025 & .898 $\pm$ .023\\
\midrule
\texttt{PCMAC}  & 
\textit{.62 $\pm$ .011} & .574$^\downarrow$ $\pm$ .032 & .54$^\downarrow$ $\pm$ .021 & \textbf{.61} $\pm$ .024 & \textbf{.61} $\pm$ .02\\
\midrule
\texttt{RELATHE}  & 
\textit{.643 $\pm$ .012} & .592 $\pm$ .021 & .577$^\downarrow$ $\pm$ .015 & \textbf{.598} $\pm$ .021 & \textbf{.598
} $\pm$ .025\\
\midrule
\texttt{BASEHOCK}  & 
\textit{.69 $\pm$ .011} & .655 $\pm$ .041 & .586$^\downarrow$ $\pm$ .01 & \textbf{.666} $\pm$ .036 & .658 $\pm$ .034\\
\midrule
\texttt{Gisette}  & 
\textit{.83 $\pm$ .011} & \textbf{.814} $\pm$ .011 & .622$^\downarrow$ $\pm$ .033 & .81 $\pm$ .014 & .813 $\pm$ .013\\
\bottomrule

\end{tabular}}
\end{table}

The classification accuracy on the unlabeled set is given in Table \ref{tab:real-data-metrics-exp-res}. At first, we observe that selection by the transductive bound leads to the significantly worst performance on all data sets except \texttt{MNIST}. This fact indicates that the minimization of this bound does not fit the feature selection procedure. In reality, this bound grows together with the margin mean ignoring the possible increase of the margin's variance.

In contrast, the $\mathcal{C}$-bound (Inequality \eqref{eq:prob-cbound}) finds the solution that both maximizes the margin mean and minimizes variance of the margin. Overall, we can see that the $\mathcal{C}$-bound and \texttt{OOB} (which is an unbiased error estimate of \texttt{RF} in the supervised case) provide comparable results in average on all data sets. However, due to the presence of imperfect labels, the criteria are not stable in their selection. The sharp decrease in performance is observed for \texttt{OOB} on \texttt{MNIST} and \texttt{PCMAC} data sets, whereas for $\mathcal{C}$-bound on \texttt{Coil20} and \texttt{Isolet} data sets.

Indeed, \texttt{OOB} error may reflect the strength of a feature subset wrongly, since it is misled by the validation set, which may contain unlabeled examples with wrongly predicted labels. In turn, the estimate of $\mathcal{C}$-bound can be strongly biased when the estimation of the margins is too optimistic for the pseudo-labeled examples.

In this connection, \texttt{CBIL} provides a safer solution by penalizing the $\mathcal{C}$-bound using $\gamma$ estimated using the labeled examples only. This criterion becomes a comprise of semi-supervised pseudo-labeled data and  supervised regularizer. From the results, we observe that \texttt{CBIL} provides safe results among all data sets and improves \texttt{CB} in 7/10 cases. For the following experiments, we use \texttt{CBIL} as a criterion of our proposed wrapper.

\subsection{Impact of the Wrapper's Search Scheme}
In this section, we study the performance of the wrapper \texttt{SEW-IL} under the use of different search algorithms. We validate our approach to select features by genetic algorithm \texttt{FSGA} by comparing with the classical genetic algorithm \texttt{CGA} and the forward sequential search algorithm  \texttt{FSS}. 
Due to large run-time of \texttt{FSS}, at each step we add 10\% best features into the model. 
For the genetic algorithms, the number of generations is set to 20, the population size is 40 and the number of parents is set to 8.

The performance results are described in Table \ref{tab:real-data-optim-exp-res}. Sequential search algorithm performs significantly worse and becomes computationally abundant on  large-scale data sets. This approves our strategy to select features by a genetic algorithm.

One can observe that \texttt{CGA} performs significantly better on 4 data sets. \texttt{FSGA} performs significantly better on \texttt{Madelon} data set, which is particularly interesting due to a very small number of informative features (20 out of 500 \cite{Guyon:2003:Design}). It validates the feature relevant test of \texttt{FSGA} that gives improvement compared to \texttt{CGA}.

Despite the high performance of \texttt{CGA},  approximately half of the features are kept. In many applications, it is desired to obtain a solution that would be sparse and/or interpretative. 
As it can be seen, \texttt{FSGA} outputs sparser feature subsets and its performance is not inferior to \texttt{CGA} in most of cases. From this we can conclude that \texttt{FSGA} provides better approximation of the minimal Markov blanket.

\setlength{\tabcolsep}{0.65em}
\setlength\extrarowheight{3pt}
\begin{table}[ht!]
\caption{The classification performance of the wrapper \texttt{SEW-IL} with different search schemes. The number of features (averaged over 20 trials), \texttt{ACC-U} and \texttt{ACC-T} for each method are indicated. }
\label{tab:real-data-optim-exp-res}

\hfill \break
\centering
\scalebox{0.58}{
\begin{tabular}{c|c||c|c| c|c| c|c}
\toprule

\multirow{3}{*}{Data set} & \multirow{3}{*}{Score} & 
\multicolumn{6}{c}{\texttt{SEW-IL}}\\
\cline{3-8}
& & \multicolumn{2}{c|}{\texttt{FSS}} & \multicolumn{2}{c|}{\texttt{CGA}} & \multicolumn{2}{c}{\texttt{FSGA}}\\
\cline{3-8}
& & \texttt{ACC} & $d'$
& \texttt{ACC} & $d'$
& \texttt{ACC} & $d'$\\

\midrule
\multirow{2}{*}{\texttt{Protein}} & $\mathtt{ACC-U}$ & 
.733$^\downarrow$ $\pm$ .028 & \multirow{2}{*}{19} & \textbf{.761} $\pm$ .03 & \multirow{2}{*}{41} & .757 $\pm$ .026 & \multirow{2}{*}{33}\\

 & $\mathtt{ACC-T}$ & 
.73 $\pm$ .042 &  & .751 $\pm$ .043 &  & \textbf{.753} $\pm$ .043 & \\
\midrule
\multirow{2}{*}{\texttt{Madelon}} & $\mathtt{ACC-U}$ & 
.585$^\downarrow$ $\pm$ .043 & \multirow{2}{*}{63} & .584$^\downarrow$ $\pm$ .021 & \multirow{2}{*}{250} & \textbf{.666} $\pm$ .042 & \multirow{2}{*}{27}\\

 & $\mathtt{ACC-T}$ & 
.591$^\downarrow$ $\pm$ .053 &  & .604$^\downarrow$ $\pm$ .042 &  & \textbf{.662} $\pm$ .046 & \\
\midrule
\multirow{2}{*}{\texttt{Isolet}} & $\mathtt{ACC-U}$ & 
.78$^\downarrow$ $\pm$ .027 & \multirow{2}{*}{73} & \textbf{.837} $\pm$ .017 & \multirow{2}{*}{313} & .814$^\downarrow$ $\pm$ .017 & \multirow{2}{*}{66}\\

 & $\mathtt{ACC-T}$ & 
.778$^\downarrow$ $\pm$ .038 &  & \textbf{.838} $\pm$ .03 &  & .802$^\downarrow$ $\pm$ .033 & \\
\midrule
\multirow{2}{*}{\texttt{Fashion}} & $\mathtt{ACC-U}$ & 
.625$^\downarrow$ $\pm$ .029 & \multirow{2}{*}{86} & \textbf{.691} $\pm$ .017 & \multirow{2}{*}{392} & .668$^\downarrow$ $\pm$ .018 & \multirow{2}{*}{100}\\
 & $\mathtt{ACC-T}$ &
.617$^\downarrow$ $\pm$ .033 &  & \textbf{.686} $\pm$ .033 &  & .657$^\downarrow$ $\pm$ .026 & \\
\midrule
\multirow{2}{*}{\texttt{MNIST}} & $\mathtt{ACC-U}$ & 
.724$^\downarrow$ $\pm$ .021 & \multirow{2}{*}{86} & \textbf{.818} $\pm$ .022 & \multirow{2}{*}{396} & .786$^\downarrow$ $\pm$ .018 & \multirow{2}{*}{91}\\
 & $\mathtt{ACC-T}$ &
.724$^\downarrow$ $\pm$ .047 &  & \textbf{.829} $\pm$ .049 &  & .794$^\downarrow$ $\pm$ .046 & \\

\midrule
\multirow{2}{*}{\texttt{Coil20}} & $\mathtt{ACC-U}$ &
.896$^\downarrow$ $\pm$ .018 & \multirow{2}{*}{102} & .938 $\pm$ .012 & \multirow{2}{*}{506} & \textbf{.941} $\pm$ .011 & \multirow{2}{*}{102}\\
  & $\mathtt{ACC-T}$ &

.889$^\downarrow$ $\pm$ .025 &  & .933 $\pm$ .023 &  & .\textbf{939} $\pm$ .024 & \\
\midrule

\multirow{2}{*}{{\texttt{PCMAC}}} & $\mathtt{ACC-U}$ & 
.638$^\downarrow$ $\pm$ .066 & \multirow{2}{*}{222} & \textbf{.818} $\pm$ .019 & \multirow{2}{*}{1647} & .813 $\pm$ .022 & \multirow{2}{*}{57}\\
 & $\mathtt{ACC-T}$ &
.615$^\downarrow$ $\pm$ .098 &  & \textbf{.847} $\pm$ .031 &  & .83 $\pm$ .035 & \\

\midrule
\multirow{2}{*}{{\texttt{RELATHE}}} & $\mathtt{ACC-U}$ & 
.728$^\downarrow$ $\pm$ .024 & \multirow{2}{*}{266} & \textbf{.773} $\pm$ .026 & \multirow{2}{*}{2169} & .74$^\downarrow$ $\pm$ .036 & \multirow{2}{*}{71}\\
 & $\mathtt{ACC-T}$ &
.708$^\downarrow$ $\pm$ .047 &  & \textbf{.78} $\pm$ .049 &  & .726$^\downarrow$ $\pm$ .054 & \\

\midrule
\multirow{2}{*}{{\texttt{BASEHOCK}}} & $\mathtt{ACC-U}$ & 
\texttt{NA} & \multirow{2}{*}{\texttt{NA}} & .91 $\pm$ .012 & \multirow{2}{*}{2433} & \textbf{.915} $\pm$ .007 & \multirow{2}{*}{75}\\

 & $\mathtt{ACC-T}$ &
\texttt{NA} &  & \textbf{.923} $\pm$ .024 &  & .912 $\pm$ .024 & \\

\midrule
\multirow{2}{*}{{\texttt{Gisette}}} & $\mathtt{ACC-U}$ & 
\texttt{NA} & \multirow{2}{*}{\texttt{NA}} & \textbf{.88} $\pm$ .014 & \multirow{2}{*}{2487} & .872 $\pm$ .015 & \multirow{2}{*}{69}\\
& $\mathtt{ACC-T}$ & 
\texttt{NA} &  & \textbf{.884} $\pm$ .041 &  & .872 $\pm$ .033 & \\

\bottomrule

\end{tabular}}
\end{table}

\subsection{Comparison with the State-of-the-Art}

\setlength{\tabcolsep}{0.65em}
\setlength\extrarowheight{3pt}
\begin{table*}[ht!]
\caption{The classification performance on the unlabeled and the test sets (\texttt{ACC-U} and \texttt{ACC-T} respectively) of various data sets presented in Table \ref{tab:data-set-description}. In addition, the number of features (averaged over 20 trials) used for learning by each method is indicated. $^\downarrow$ indicates statistically significantly worse performance than the best result (shown in bold), according to a Mann-Whitney U test ($p < 0.01)$.}
\label{tab:real-data-sota-exp-res}

\hfill \break
\centering
\scalebox{0.65}{
\begin{tabular}{c|c||c|c| c|c| c|c| c|c| c|c}
\toprule

\multirow{2}{*}{Data set} & \multirow{2}{*}{Score} & \multicolumn{2}{c|}{\texttt{RLSR}} & 
\multicolumn{2}{c|}{\texttt{SFS}} & 
\multicolumn{2}{c|}{\texttt{SSLS}} & 
\multicolumn{2}{c|}{\texttt{CoT-FSS}} & \multicolumn{2}{c}{\texttt{SEW-IL}}\\
\cline{3-12}
& & \texttt{ACC} & $d'$
& \texttt{ACC} & $d'$
& \texttt{ACC} & $d'$
& \texttt{ACC} & $d'$
& \texttt{ACC} & $d'$\\

\midrule
\multirow{2}{*}{\texttt{Protein}} & $\mathtt{ACC-U}$ & 
.727$^\downarrow$ $\pm$ .024 & \multirow{2}{*}{19} & .712$^\downarrow$ $\pm$ .028 & \multirow{2}{*}{19} & .685$^\downarrow$ $\pm$ .028 & \multirow{2}{*}{19} & .715$^\downarrow$ $\pm$ 
.039 & \multirow{2}{*}{19} & \textbf{.757} $\pm$ .026 & \multirow{2}{*}{33}\\
& $\mathtt{ACC-T}$ & 

.721 $\pm$ .046 &  & .716 $\pm$ .049 &  & .673$^\downarrow$ $\pm$ .048 &  & .715$^\downarrow$ $\pm$ .05 &  & \textbf{.753} $\pm$ .043 & \\

\midrule
\multirow{2}{*}{\texttt{Madelon}} & $\mathtt{ACC-U}$ & 
.558$^\downarrow$ $\pm$ .036 & \multirow{2}{*}{63} & .589$^\downarrow$ $\pm$ .028 & \multirow{2}{*}{63} & .553$^\downarrow$ $\pm$ .04 & \multirow{2}{*}{63} & .54$^\downarrow$ $\pm$ .042 & \multirow{2}{*}{63} & \textbf{.666} $\pm$ .042 & \multirow{2}{*}{27}\\

 & $\mathtt{ACC-T}$ & 
 .564$^\downarrow$ $\pm$ .062 &  & .601$^\downarrow$ $\pm$ .032 &  & .566$^\downarrow$ $\pm$ .06 &  & .531$^\downarrow$ $\pm$ .062 &  & \textbf{.662} $\pm$ .046 & \\
 
\midrule
\multirow{2}{*}{\texttt{Isolet}} & $\mathtt{ACC-U}$ & 
\textbf{.822} $\pm$ .02 & \multirow{2}{*}{73} & .672$^\downarrow$ $\pm$ .022 & \multirow{2}{*}{73} & .666$^\downarrow$ $\pm$ .016 & \multirow{2}{*}{73} & .658$^\downarrow$ $\pm$ .058 & \multirow{2}{*}{73} & .814 $\pm$ .017 & \multirow{2}{*}{66}\\

 & $\mathtt{ACC-T}$ & 
 \textbf{.814} $\pm$ .029 &  & .659$^\downarrow$ $\pm$ .037 &  & .649$^\downarrow$ $\pm$ .036 &  & .648$^\downarrow$ $\pm$ .071 &  & .802 $\pm$ .033 & \\
 
\midrule
\multirow{2}{*}{\texttt{Fashion}} & $\mathtt{ACC-U}$ & 
.591$^\downarrow$ $\pm$ .016 & \multirow{2}{*}{86} & .528$^\downarrow$ $\pm$ .034 & \multirow{2}{*}{86} & .512$^\downarrow$ $\pm$ .031 & \multirow{2}{*}{86} & \texttt{NA} & \multirow{2}{*}{\texttt{NA}} & \textbf{.668} $\pm$ .018 & \multirow{2}{*}{100}\\

 & $\mathtt{ACC-T}$ &
 .59$^\downarrow$ $\pm$ .041 &  & .52$^\downarrow$ $\pm$ .054 &  & .504$^\downarrow$ $\pm$ .045 &  & \texttt{NA} &  & \textbf{.657} $\pm$ .026 & \\
 
\midrule
\multirow{2}{*}{\texttt{MNIST}} & $\mathtt{ACC-U}$ & 
.21$^\downarrow$ $\pm$ .022 & \multirow{2}{*}{86} & .111$^\downarrow$ $\pm$ .002 & \multirow{2}{*}{86} & .445$^\downarrow$ $\pm$ .062 & \multirow{2}{*}{86} & \texttt{NA} & \multirow{2}{*}{\texttt{NA}} & \textbf{.786} $\pm$ .018 & \multirow{2}{*}{91}\\

 & $\mathtt{ACC-T}$ &
.212$^\downarrow$ $\pm$ .03 &  & .109$^\downarrow$ $\pm$ .003 &  & .451$^\downarrow$ $\pm$ .052 &  & \texttt{NA} &  & \textbf{.794} $\pm$ .046 & \\

\midrule
\multirow{2}{*}{\texttt{Coil20}} & $\mathtt{ACC-U}$ &
.922$^\downarrow$ $\pm$ .013 & \multirow{2}{*}{102} & .81$^\downarrow$ $\pm$ .015 & \multirow{2}{*}{102} & .813$^\downarrow$ $\pm$ .018 & \multirow{2}{*}{102} & .843$^\downarrow$ $\pm$ .069 & \multirow{2}{*}{102} & \textbf{.941} $\pm$ .011 & \multirow{2}{*}{102}\\

  & $\mathtt{ACC-T}$ &
.916$^\downarrow$ $\pm$ .025 &  & .816$^\downarrow$ $\pm$ .025 &  & .809$^\downarrow$ $\pm$ .023 &  & .832$^\downarrow$ $\pm$ .083 &  &\textbf{ .939} $\pm$ .024 & \\

\midrule

\multirow{2}{*}{{\texttt{PCMAC}}} & $\mathtt{ACC-U}$ & 
\textbf{.817} $\pm$ .021 & \multirow{2}{*}{222} & .726$^\downarrow$ $\pm$ .047 & \multirow{2}{*}{222} & .595$^\downarrow$ $\pm$ .057 & \multirow{2}{*}{222} & \texttt{NA} & \multirow{2}{*}{\texttt{NA}} & .813 $\pm$ .022 & \multirow{2}{*}{57}\\

& $\mathtt{ACC-T}$ &
.825 $\pm$ .037 &  & .727$^\downarrow$ $\pm$ .061 &  & .598$^\downarrow$ $\pm$ .066 &  & \texttt{NA} &  & \textbf{.83} $\pm$ .035 & \\

\midrule
\multirow{2}{*}{{\texttt{RELATHE}}} & $\mathtt{ACC-U}$ & 

\textbf{.759} $\pm$ .019 & \multirow{2}{*}{266} & .658$^\downarrow$ $\pm$ .033 & \multirow{2}{*}{266} & .606$^\downarrow$ $\pm$ .02 & \multirow{2}{*}{266} & \texttt{NA} & \multirow{2}{*}{\texttt{NA}} & .74 $\pm$ .036 & \multirow{2}{*}{71}\\

 & $\mathtt{ACC-T}$ &
\textbf{.757} $\pm$ .045 &  & .647$^\downarrow$ $\pm$ .037 &  & .607$^\downarrow$ $\pm$ .028 &  & \texttt{NA} &  & .726 $\pm$ .054 & \\

\midrule
\multirow{2}{*}{{\texttt{BASEHOCK}}} & $\mathtt{ACC-U}$ & 
.908 $\pm$ .015 & \multirow{2}{*}{287} & .83$^\downarrow$ $\pm$ .039 & \multirow{2}{*}{287} & .656$^\downarrow$ $\pm$ .075 & \multirow{2}{*}{287} & \texttt{NA} & \multirow{2}{*}{\texttt{NA}} & \textbf{.915} $\pm$ .007 & \multirow{2}{*}{75}\\

 & $\mathtt{ACC-T}$ &
\textbf{.912} $\pm$ .024 &  & .831$^\downarrow$ $\pm$ .053 &  & .666$^\downarrow$ $\pm$ .086 &  & \texttt{NA} &  & \textbf{.912} $\pm$ .024 & \\

\midrule
\multirow{2}{*}{{\texttt{Gisette}}} & $\mathtt{ACC-U}$ & 
.669$^\downarrow$ $\pm$ .084 & \multirow{2}{*}{293} & \textbf{.877} $\pm$ .012 & \multirow{2}{*}{293} & .615$^\downarrow$ $\pm$ .041 & \multirow{2}{*}{293} & \texttt{NA} & \multirow{2}{*}{\texttt{NA}} & .872 $\pm$ .015 & \multirow{2}{*}{69}\\

& $\mathtt{ACC-T}$ & 
.683$^\downarrow$ $\pm$ .086 &  & \textbf{.874} $\pm$ .035 &  & .614$^\downarrow$ $\pm$ .059 &  & \texttt{NA} &  & .872 $\pm$ .033 & \\
\bottomrule

\end{tabular}}
\end{table*}

In our experiments we compare our approach \texttt{SEW-IL} optimized by \texttt{FSGA} with the following methods: an embedded selection by rescaled linear square regression \texttt{RLSR} \cite{Chen:2017}; a feature ranking by Semi\_Fisher score \texttt{SFS} \cite{Yang:2010}; a feature ranking by semi-supervised Laplacian score \texttt{SSLS} \cite{Zhao:2008}; a semi-supervised wrapper using co-training and the forward sequential search algorithm \texttt{CoT-FSS} \cite{Ren:2008}.

The hyperparameters of all methods are set to their default values as there are not enough labeled training samples to tune them correctly. Specifically, $\gamma$ for \texttt{RLSR} is set to 0.1; the number of nearest neighbors is set to 20 for \texttt{SSLS} and \texttt{SFS}. Implementing the \texttt{CoT-FSS}, we take the \texttt{RF} as the base classifier and \texttt{OOB} as the selection criterion, since the paper does not go beyond the supervised selection criteria.


For \texttt{RLSR}, \texttt{SFS}, \texttt{SSLS} and \texttt{CoT-FSS}, we fix the number of selected features as $\ceil{d^{2/3}}$. Table \ref{tab:real-data-sota-exp-res} summarizes the performance results and  the number of features.

From the results we observe that \texttt{SEW-IL} compares well to other methods and significantly outperforms the others  wrt \texttt{ACC-U} and \texttt{ACC-T} on 4 data sets. On the large dimensional sets \texttt{PCMAC, RELATHE, BASEHOCK, Gisette} the method provides an extremely sparse output not being significantly worse. In addition, in cases when the initial number of features ($\sqrt{d}$) would not be sufficient, the algorithm may increase this number through propagating mutated candidates or by combining outputs at the final stage.

Compared to our approach, another wrapper \texttt{CoT-FSS} has large computational time and becomes infeasible on the large-scale data sets not performing well on the others. We can also see that the performance of \texttt{RLSR} significantly fluctuates from one data set to another. This could be connected with its sensitivity to the value of its regularization parameter $\gamma$, which can not be tuned in practice.

The filter methods, \texttt{SFS} and \texttt{SSLS}, in most of situations are significantly worse than \texttt{SEW-IL}. This may be connected with the fact that they score variables independently, so in the presence of many redundant variables 
they tend to underselect some individually "weak" informative features and select mostly "strong" variables, which would bring no new information with respect to each other. In contrast, our approach searches features that will be \textit{jointly} strong, therefore, it is less prone to missing important variables.

\subsection{Complexity Analysis}

Time complexity of the feature selection methods are compared. The complexity of \texttt{SSLS} and \texttt{SFS} is $O(d n^2)$. Both approaches are very fast on small and medium sample size, however, due to quadratic relation to $n$, they progressively slow down with increase of $n$. \texttt{RLSR} has the complexity $O(\max\{d n^2, d^3\})$ and suffers when either $n$ or $d$ are large. 

\texttt{RF}, used as the base classifier for pseudo-labeling and as the approximation of the Bayes classifier, has linear complexity with respect to dimension and log-linear to the number of observations $O(d n\log n)$, which indicates its good scalability. Nevertheless, the \texttt{FSS} ranking features sequentially has the complexity $O(d^3 n log(n))$, and its combination with co-training limits the use of \texttt{CoT-FSS} in practice.

The time complexity of our wrapper approach with genetic algorithm is $O(\tilde{d} N_g N_c n\log n)$, where $N_g$ is the number of generations, $N_c$ is the population size, $\tilde{d}$ is the maximal number of features used in a learning model. From our observations, $\tilde{d}\approx d/2$ for \texttt{CGA} and $\tilde{d}\approx \sqrt{d}$ for \texttt{FSGA}.

When the sample-size and the dimension are moderate, our approach is computationally more expensive than embedding and filter selections. However, as $d$ grows, \texttt{RLSR} becomes significantly slower. Moreover, the log-linear relation to $n$ indicates that \texttt{SEW-IL} passes the scale and would be faster than the filter methods on very-large data sets, which is usually the case of visual applications.

\section{Conclusion}
\label{sec:conclusion}
In this paper we proposed a new framework for semi-supervised wrapper feature selection.
To increase the diversity of labeled data, unlabeled examples are pseudo-labeled using a self-learning algorithm. We extended the $\mathcal{C}$-bound to the case where these examples are given imperfect class labels. An objective of the proposed wrapper is to minimize this bound using a genetic algorithm. To produce a sparse solution, we proposed a modification of the latter by taking into account feature weights during its evolutionary process. We provided empirical evidence of our framework in comparison with two semi-supervised filter techniques, an embedded approach as well as a wrapper feature selection algorithm. The proposed modification of the genetic algorithm provides a trade-off in tasks, where both high performance and low dimension are reached. 


\bibliographystyle{apalike}
\bibliography{icml-paper.bib}
\end{document}